\definecolor{LightCyan}{rgb}{0.88,1,1}
\definecolor{Gray}{rgb}{0.82,0.82,0.82}
\newcommand{\xbf}{\mathbf{x}}
\newcommand{\wbf}{\mathbf{w}}
\newcommand{\gammabar}{\overline{\gamma}}
\newcommand{\etilde}{\widetilde{e}}
\newcommand{\mubar}{\overline{\mu}}
\newcommand{\wbftilde}{\widetilde{\wbf}}
\theoremstyle{plain}
\newtheorem{thm}{Theorem}
\newtheorem{cor}{Corollary}
\begin{document}
%
% paper title
% Titles are generally capitalized except for words such as a, an, and, as,
% at, but, by, for, in, nor, of, on, or, the, to and up, which are usually
% not capitalized unless they are the first or last word of the title.
% Linebreaks \\ can be used within to get better formatting as desired.
% Do not put math or special symbols in the title.
%ROMANIAN JOURNAL OF  INFORMATION
%
%SCIENCE AND TECHNOLOGY
%
%Volume .., Number .., 2020,  201–217

%\begin{center}
%	{\LARGE Robustness Analysis of the Data-Selective \\ Volterra NLMS Algorithm}\\
%	Javad Sharafi\\
%	{\small Department of Mathematics, University of Kashan, Kashan, Iran\\Email: Javadsharafi@grad.kashanu.ac.ir}
%\end{center}
\title{Robustness Analysis of the Data-Selective \\ Volterra NLMS Algorithm}
%
%
% author names and IEEE memberships
% note positions of commas and nonbreaking spaces ( ~ ) LaTeX will not break
% a structure at a ~ so this keeps an author's name from being broken across
% two lines.
% use \thanks{} to gain access to the first footnote area
% a separate \thanks must be used for each paragraph as LaTeX2e's \thanks
% was not built to handle multiple paragraphs
%

\author{Javad Sharafi and Abbas Maarefparvar
	% <-this % stops a space
	\thanks{(J. Sharafi) and (A. Maarefparvar) Imam Ali University, Tehran, Iran.}
	\thanks{E-mail addresses: javadsharafi@grad.kashanu.ac.ir, a.marefparvar@gmail.com.}% <-this % stops a space
	%\thanks{Manuscript received April 19, 2018; revised August 26, 2018.}
}

%
%
% author names and IEEE memberships
% note positions of commas and nonbreaking spaces ( ~ ) LaTeX will not break
% a structure at a ~ so this keeps an author's name from being broken across
% two lines.
% use \thanks{} to gain access to the first footnote area
% a separate \thanks must be used for each paragraph as LaTeX2e's \thanks
% was not built to handle multiple paragraphs
%

\vspace{1cm}

% note the % following the last \IEEEmembership and also \thanks - 
% these prevent an unwanted space from occurring between the last author name
% and the end of the author line. i.e., if you had this:
% 
% \author{....lastname \thanks{...} \thanks{...} }
%                     ^------------^------------^----Do not want these spaces!
%

% make the title area
\maketitle

% As a general rule, do not put math, special symbols or citations
% in the abstract or keywords.
\begin{abstract}
Recently, the data-selective adaptive Volterra filters have been proposed; however, up to now, there are not any theoretical analyses on its behavior rather than numerical simulations.
Therefore, in this paper, we analyze the robustness (in the sense of $l_2$-stability) of the data-selective Volterra normalized least-mean-square (DS-VNLMS) algorithm.
First, we study the local robustness of this algorithm at any iteration, then we propose a global bound for the error/discrepancy in the coefficient vector.
Also, we demonstrate that the DS-VNLMS algorithm improves the parameter estimation for the majority of the iterations that an update is implemented.
Moreover, we also prove that if the noise bound is known, then we can set the DS-VNLMS so that it never degrades the estimate.
The simulation results corroborate the validity of the executed analysis and demonstrate that the DS-VNLMS algorithm is robust against noise, no matter how its parameters are adopted.
\end{abstract}

% Note that keywords are not normally used for peerreview papers.

{\bf Keywords:} {\small Nonlinear adaptive filter, Volterra series, data selection, robustness, error bounds}

% For peer review papers, you can put extra information on the cover
% page as needed:
% \ifCLASSOPTIONpeerreview
% \begin{center} \bfseries EDICS Category: 3-BBND \end{center}
% \fi
%
% For peerreview papers, this IEEEtran command inserts a page break and
% creates the second title. It will be ignored for other modes.
%\IEEEpeerreviewmaketitle

\section{Introduction}
\label{sec:Introduction}
%-----------------------

Nonlinear systems have been utilized in many real-wold problems, such as nonlinear echo cancellation~\cite{Azpicueta_Volterra_echo_cancellation_taslp2011}, nonlinear controllers~\cite{He_Vol_noise_controller_aut2016}, wireless sensor networks~\cite{Prado_volterra_wsn_IJCNN2018}, biological systems~\cite{Berger_biological_proc2010}, audio processing~\cite{George_audio_nonlinear_SP2013}, to mention but a few.
Indeed, when an online nonlinear solution is required, the adaptive Volterra filter (AVF) is the most appealing candidate~\cite{Claser_rlsdcd_volterra_eusipco2016,Tan_ac_noise_volterra_ciea2009}.
However, the fundamental drawback of this technique is its high computational resources since the number of AVF coefficients increases exponentially with the filter order and geometrically with the filter memory.

Recently, a data-selective AVF (DS-AVF) has been proposed to reduce the computational cost of the AVF algorithms~\cite{Silva_DS_VAF_cssp2018}.
The data-selection strategy, apart from maintaining the advantages of the conventional AVF algorithm, assesses the incoming data before utilizing them in the learning process.
This approach can improve the accuracy, the robustness against noise, and the computational complexity of the learning process by preventing the algorithm from updating adaptive coefficients when there is not enough innovation in input data~\cite{Hamed_SMrobustness_jasp2017,Zhang_sm_nlms_error_bound_tcasii2014,Hamed_robust_smnlms_sam2016,Diniz_improved_partial_icassp2016}.

To the best of our knowledge, since the DS-AVF had been proposed in 2018, no study on the properties of this algorithm is executed, whereas there are many works in literature to address the theoretical behavior of the conventional AVFs\cite{Mumolo_stability_rvf_spl1999,Sayadi_mse_avf_tsp1999,Chao_error_surface_volterra_mwscas2004,Motonaka_sma_volterra_apsipa2018}.
Therefore, in this work, similarly to the study for linear filters~\cite{Hamed_SMrobustness_jasp2017,Hamed_robust_smnlms_sam2016}, we analyze the robustness (in the sense of $l_2$-stability) of the data-selective Volterra normalized least-mean-square (DS-VNLMS) algorithm.
To this end, first, we introduce the robustness criterion, then we propose the local and the global robustness properties of the DS-VNLMS algorithm.
Moreover, we study the situations where the noise bound is assumed known and unknown.

This paper is organized as follows. Sections~\ref{sec:Volterra} and~\ref{sec:robustness_criterion} provide a brief review of Volterra filters and the robustness criterion for AVFs, respectively.
In Section~\ref{sec:robustness_DS_VNLMS}, the local and the global robustness of the DS-VNLMS algorithm is studied.
Also, the cases of known and unknown noise bound are discussed in this section.
The validity of our analysis is verified in Section~\ref{sec:simulations}.
Finally, the conclusions are drawn in Section~\ref{sec:conclusions}.

{\it Notation:} Scalars are denoted by lowercase letters.
Vectors (matrices) are presented by lowercase (uppercase) boldface letters.
The $l_2$-norm of a vector $\wbf\in\mathbb{R}^{N}$ is defined by $\|\wbf\|^2=\sum_{i=0}^N|w_i|^2$.
Moreover, the superscript $(\cdot)^T$ denotes the vector or matrix transpose operator, and $\mathbb{R}_+$ stands for the positive real numbers.

%--------------------------------------------------------------------------------------------------------------

\section{A short review on Volterra series} \label{sec:Volterra}

Suppose that $x(k)$ and $d(k)$ are the input and the desired signals of a system at the time instant $k$, and they are related to each other by a nonlinear, time-invariant, finite-memory, causal, continuous relationship $d(k)=f(\xbf_1(k))$, where $\xbf_1(k)=[x(k)~x(k-1)~\cdots~x(k-N)]^T$ and $N$ is the system memory length.
A truncated Volterra series expansion of order P can be used to evaluate the signal $d(k)$ as
\begin{align}
d(k)=\sum_{p=0}^PW_p(\xbf_1(k))+n(k), \label{eq:desired-signal}
\end{align}
where $n(k)$ stands for the measurement noise. Also, by employing the triangular structure of the Volterra series~\cite{Mathews_polynomial_signal_processing_book2000,Hamed_Volterra_eusipco2019}, $W_p(\xbf_1(k))$ can be defined by
\begin{align}
W_p(\xbf_1(k))\triangleq\sum_{l_1=0}^N\ldots\hspace{-1mm}\sum_{l_p=l_{p-1}}^N \hspace{-1mm}w_p(l_1,\ldots,l_p)\prod_{i=1}^p x(n-l_i),
\label{eq:cinque}
\end{align}
where $w_p(l_1,\ldots,l_p)$ is the Volterra kernel of order $p$, for all $l_1,\ldots,l_p$.
Without loose of generality, we can consider the constant term of the Volterra
series expansion, $w_0$, identical to zero; thus, the target of the Volterra adaptive filter is to compute the Volterra kernels for all $l_1,\ldots,l_p$ and $p=1,\ldots,P$.

In order to convenience, Equations~\eqref{eq:desired-signal} and~\eqref{eq:cinque} can be represented in a compact formula~\cite{Mathews_polynomial_signal_processing_book2000,Diniz_adaptiveFiltering_book2013}.
Indeed, denote by $\xbf_p(k)$ and $\wbf_p$ the vector generated by all input sample products appearing in~\eqref{eq:cinque} and the vector containing the corresponding Volterra kernels $w_p(l_1,\ldots,l_p)$, respectively.
Then, by defining $W_p(\xbf_1(k))=\wbf_p^T\xbf_p(k)$, Equation~\eqref{eq:desired-signal} can be reformulated as
\begin{align}
d(k)=\wbf^T\xbf(k)+n(k), \label{eq:desired_vectorial}
\end{align}
where $\xbf(k)\triangleq[\xbf_1^T(k)~\cdots~\xbf_P^T(k)]^T$, $\wbf\triangleq[\wbf_1^T~\cdots~\wbf_P^T]^T$.

%--------------------------------------------------------------------------------------------------------------

\section{Robustness criterion} \label{sec:robustness_criterion}

Suppose that the unknown system is given by $\wbf_*$, using Equation~\eqref{eq:desired_vectorial}, the desired signal can be expressed by $d(k)=y_*(k)+n(k)$, where $y_*(k)=\wbf_*^T\xbf(k)$.
Moreover, assume that the sequence of the noise signal $\{n(k)\}$ has finite energy~\cite{Sayed_adaptiveFilters_book2008}, i.e.,
\begin{align}
\sum_{k=0}^j|n(k)|^2<\infty,\qquad\text{for~all~}j. \label{eq:noise_condition}
\end{align}
Our target is to estimate $y_*(k)$.
To this end, suppose that $\hat{y}_{k|k}$ is an approximation of $y_*(k)$, where it is only dependent on $d(j)$ for $j=0,\cdots,k$.
For a particular $\eta\in\mathbb{R}_+$, we intend to compute the approximations $\hat{y}_{k|k}\in\{\hat{y}_{0|0},\hat{y}_{1|1},\cdots,\hat{y}_{M|M}\}$, so that for any $\wbf_*$ and any noise signal satisfying~\eqref{eq:noise_condition}, the criterion below is guaranteed~\cite{Hamed_SMrobustness_jasp2017}:
\begin{align}
\frac{\sum\limits_{k=0}^j \|\hat{y}_{k|k}-y_*(k)\|^2}{\wbftilde^T(0)\wbftilde(0)+\sum_{k=0}^j|n(k)|^2}<\eta^2,\quad
{\rm for~all~} j=0,\cdots,M, \label{eq:criterion}
\end{align}
where $\wbftilde(0)$ is the difference between the unknown system and the initial guess; i.e., $\wbftilde(0)=\wbf_*-\wbf(0)$.

To interpret~\eqref{eq:criterion}, it should be noted that the numerator counts the estimation-error energy up to iteration $j$, and the denominator incorporated the energy of noise up to instant time $j$ and the energy of the error caused by the initial guess.
Therefore, this criterion stipulates us to find the approximations $\{\hat{y}_{k|k}\}$ so that the ratio of the estimation-error energy to the energy of unreliability does not become greater than $\eta^2$.
In other words, when~\eqref{eq:criterion} is true, bounded disturbance energies imply bounded estimation-error energies; thus, the algorithm resulting $\{\hat{y}_{k|k}\}$ is robust.

%-----------------------------------------------------------------------------------------------------------------

\section{Robustness of the DS-VNLMS algorithm} \label{sec:robustness_DS_VNLMS}

In this section, first, we review the recursion rule of the DS-VNLMS algorithm, then we analyze the local robustness of the algorithm in the sense of $l_2$-stability.
Finally, the global robustness property for the DS-VNLMS algorithm is presented.

The data selection in the DS-VNLMS algorithm is performed by means of the set-membership filtering approach~\cite{Diniz_adaptiveFiltering_book2013,Hamed_trinion_tcasii2017,Hamed_improved_ietsp2019}.
Indeed, the DS-VNLMS algorithm updates the Volterra kernels when the magnitude of the output estimation error is greater than a predetermined positive value $\gammabar\in\mathbb{R}_+$, and its update equation is characterized by~\cite{Silva_DS_VAF_cssp2018}
\begin{align}
\wbf(k+1)=\wbf(k)+\frac{\mu(k)}{\xbf^T(k)\xbf(k)+\delta}e(k)\xbf(k), \label{eq:updare_equation}
\end{align} 
where $e(k)$ is the error signal, and it is described by $e(k)=d(k)-\wbf^T(k)\xbf(k)$.
Moreover, $\mu(k)$ is the step-size parameters and is defined by
\begin{align}
\mu(k)=\left\{\begin{array}{ll}1-\frac{\gammabar}{|e(k)|}&\text{if~}|e(k)|>\gammabar,\\0&\text{otherwise}.\end{array}\right. \label{eq:step-size}
\end{align}
Generally, $\gammabar$ is chosen based on some {\it a priori} knowledge of the problem, such as the information about the measurement noise~\cite{Diniz_adaptiveFiltering_book2013,Diniz_DS_AF_tsp2018}.
Also, $\delta$ is a small positive constant, and it is utilized to prevent division by zero when $\xbf^T(k)\xbf(k)=0$.

To facilitate the analysis of the DS-VNLMS algorithm, the recursion rule~\eqref{eq:updare_equation} can be rewritten as
\begin{align}
\wbf(k+1)=\wbf(k)+\frac{\mubar(k)}{\alpha(k)}e(k)\xbf(k)f(e(k),\gammabar), \label{eq:revised_update_equation}
\end{align} 
where $\mubar(k)=1-\frac{\gammabar}{|e(k)|}$, $\alpha(k)=\xbf^T(k)\xbf(k)+\delta$, and $f:\mathbb{R}\times\mathbb{R}_+\rightarrow\{0,1\}$ is the indicator function defined by
\begin{align}
f(e(k),\gammabar)=\left\{\begin{array}{ll}1&\text{if~}|e(k)|>\gammabar,\\0&\text{otherwise}.\end{array}\right. \label{eq:function_f}
\end{align}
Also, assume a system identification scenario where the unknown system is denoted by $\wbf_*$ and the desired signal is obtained by $d(k)=\wbf_*^T\xbf(k)+n(k)$.
Moreover, denote by $\wbftilde(k)$ the discrepancy between the unknown system $\wbf_*$ and the adaptive Volterra kernels $\wbf(k)$, i.e., $\wbftilde(k)=\wbf_*-\wbf(k)$.
Hence, the error signal can be expressed by
\begin{align}
e(k)=d(k)-\wbf^T(k)\xbf(k)=\underbrace{\wbftilde^T(k)\xbf(k)}_{\etilde(k)}+n(k), \label{eq:error_signal}
\end{align}
where $\etilde(k)=\wbftilde^T(k)\xbf(k)$ stands for the noiseless error.

\begin{thm}[Local robustness of the DS-VNLMS] \label{thm:local_robustness}
	For the DS-VNLMS algorithm, we always have
	\begin{align}
	\|\wbftilde(k+1)\|^2 = \|\wbftilde(k)\|^2, \text{ if } f(e(k),\gammabar) = 0     \label{eq:local_robustness_f0}
	\end{align}
	or
	\begin{align}
	\|\wbftilde(k+1)\|^2+\frac{\mubar(k)}{\alpha(k)}\etilde^2(k) 
	< \|\wbftilde(k)\|^2+\frac{\mubar(k)}{\alpha(k)} n^2(k), \label{eq:local_robustness_f1}
	\end{align}
	if $f(e(k),\gammabar) = 1$.
\end{thm}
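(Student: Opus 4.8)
The plan is to handle the two cases separately, the first being immediate and the second reducing, after a single clean simplification, to checking the sign of one product. For the case $f(e(k),\gammabar)=0$, the update term in~\eqref{eq:revised_update_equation} vanishes, so $\wbf(k+1)=\wbf(k)$ and hence $\wbftilde(k+1)=\wbf_*-\wbf(k+1)=\wbftilde(k)$, which gives~\eqref{eq:local_robustness_f0} at once.

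For the case $f(e(k),\gammabar)=1$, I would first subtract the active update from $\wbf_*$ to obtain the error recursion $\wbftilde(k+1)=\wbftilde(k)-\frac{\mubar(k)}{\alpha(k)}e(k)\xbf(k)$, and then expand $\|\wbftilde(k+1)\|^2$. Invoking $\wbftilde^T(k)\xbf(k)=\etilde(k)$ from~\eqref{eq:error_signal} together with $\xbf^T(k)\xbf(k)=\alpha(k)-\delta$, this produces a cross term $-2\frac{\mubar(k)}{\alpha(k)}e(k)\etilde(k)$ and a quadratic term $\frac{\mubar^2(k)}{\alpha^2(k)}e^2(k)(\alpha(k)-\delta)$.

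The decisive step is to form the difference between the two sides of the claimed inequality~\eqref{eq:local_robustness_f1} and factor out the common positive factor $\frac{\mubar(k)}{\alpha(k)}$. What remains inside the bracket is $-2e(k)\etilde(k)+\mubar(k)\frac{\alpha(k)-\delta}{\alpha(k)}e^2(k)+\etilde^2(k)-n^2(k)$. Substituting $e(k)=\etilde(k)+n(k)$, I expect the combination $-2e(k)\etilde(k)+\etilde^2(k)-n^2(k)$ to collapse exactly to $-(\etilde(k)+n(k))^2=-e^2(k)$, so the whole bracket becomes $e^2(k)\bigl(\mubar(k)\frac{\alpha(k)-\delta}{\alpha(k)}-1\bigr)$. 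This collapse is the only nonroutine point, and it is what makes the statement a tight identity-plus-sign rather than a looser estimate.

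It then remains only to verify the sign. Since $|e(k)|>\gammabar>0$ we have $e^2(k)>0$ and $\mubar(k)=1-\gammabar/|e(k)|\in(0,1)$, while $\delta>0$ forces $(\alpha(k)-\delta)/\alpha(k)\in[0,1)$; hence the product $\mubar(k)\frac{\alpha(k)-\delta}{\alpha(k)}$ is strictly below one and the bracketed quantity is strictly negative. Multiplying by the positive factor $\frac{\mubar(k)}{\alpha(k)}$ preserves the strict sign and yields~\eqref{eq:local_robustness_f1}. I do not anticipate any genuine obstacle beyond bookkeeping; the one point to watch is strictness, which is in fact guaranteed independently by either $\gammabar>0$ (so $\mubar(k)<1$) or $\delta>0$ (so the variance ratio stays below one).
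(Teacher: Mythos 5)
Your proof is correct and follows essentially the same route as the paper: both expand $\|\wbftilde(k+1)\|^2$ from the error recursion and reduce the claim to the sign of the very same remainder term---your bracketed quantity $\frac{\mubar(k)}{\alpha(k)}e^2(k)\bigl(\mubar(k)\frac{\alpha(k)-\delta}{\alpha(k)}-1\bigr)$ is exactly the paper's product $c_1c_2$, since $\alpha(k)-\delta=\|\xbf(k)\|^2$. The only difference is bookkeeping: the paper splits $e(k)=\etilde(k)+n(k)$ before expanding and regroups into an identity plus $c_1c_2$, whereas you keep $e(k)$ intact and collapse the cross terms at the end; the sign argument ($\mubar(k)\in(0,1)$ and $\|\xbf(k)\|^2/\alpha(k)<1$) is identical.
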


\begin{proof}
	From $\wbf_*$, subtract both sides of~\eqref{eq:revised_update_equation} and use the definition of $\wbftilde(k)$, we attain
	\begin{align}
	\wbftilde(k+1)=\wbftilde-\frac{\mubar}{\alpha}e\xbf f,
	\end{align}
	where the time index $k$ and the arguments of function $f$ are eliminated to simplify the mathematical notations.
	After decomposing $e(k)$ utilizing Equation~\eqref{eq:error_signal}, we get
	\begin{align}
	\wbftilde(k+1) = \wbftilde - \frac{\mubar}{\alpha} \etilde \xbf f - \frac{\mubar}{\alpha} n \xbf f. \label{eq:robust_aux01}
	\end{align}
	By multiplying each side of the above equation to its transpose and performing some mathematical manipulations, we acquire
	\begin{align}
	\| \wbftilde(k+1) \|^2 
	=&\|\wbftilde \|^2-2\frac{\mubar}{\alpha}\etilde^2f-2\frac{\mubar}{\alpha}n\etilde f +(\etilde+n)^2\frac{\mubar^2}{\alpha^2}\|\xbf\|^2f^2    \nonumber \\
	=&\|\wbftilde\|^2+(\etilde+n)^2\frac{\mubar^2}{\alpha^2}\|\xbf\|^2f^2+\frac{\mubar}{\alpha}n^2 f  -(\etilde+n)^2\frac{\mubar}{\alpha}f-\frac{\mubar}{\alpha}\etilde^2f.   \label{eq:robustness_derivation_1}                                 
	\end{align}
	If we rearrange the above equation, we get
	\begin{align}
	\| \wbftilde(k+1) \|^2 + \frac{\mubar f}{\alpha} \etilde^2 
	= \| \wbftilde \|^2 + \frac{\mubar f}{\alpha} n^2 + c_1 c_2  ,   \label{eq:energy_relation}
	\end{align}
	in which 
	\begin{align}
	c_1 =\frac{\mubar f}{\alpha} (\etilde + n)^2,\qquad
	c_2 =\frac{\mubar f}{\alpha} \| \xbf \|^2 - 1    .
	\end{align}
	On the one hand, for $f=0$ in~\eqref{eq:energy_relation}, we obtain
	\begin{align}
	\| \wbftilde(k+1) \|^2 = \| \wbftilde(k) \|^2,
	\end{align}
	and it proves the first statement of the theorem; i.e., Equation~\eqref{eq:local_robustness_f0}.
	On the other hand, for $f=1$, we have $|e|>\gammabar>0$ by~\eqref{eq:function_f}; thus $0<\mubar<1$.
	They result in $c_1>0$ since $\alpha=\|\xbf\|^2+\delta$ is positive.
	Furthermore, $\frac{\|\xbf\|^2}{\alpha}<1$ since $\delta>0$; thus, $0<\frac{\mubar f}{\alpha}\|\xbf\|^2<1$.
	It leads to $c_2<0$, and we obtain $c_1c_2<0$ when $f=1$.
	Therefore, after eliminating $c_1c_2<0$ from~\eqref{eq:energy_relation}, we get
	\begin{align}
	\| \wbftilde(k+1) \|^2 + \frac{\mubar}{\alpha} \etilde^2 
	< \| \wbftilde \|^2 + \frac{\mubar}{\alpha} n^2,  
	\end{align}
	and it proves the second statement of the theorem; i.e., Equation~\eqref{eq:local_robustness_f1}.
\end{proof}

Theorem~\ref{thm:local_robustness} describes the coefficient deviation of the DS-VNLMS algorithm from any iteration $k$ to $k+1$.
To be more clear, Equation~\eqref{eq:local_robustness_f0} shows that the coefficient deviation does not change, when no update is implemented; however, Equation~\eqref{eq:local_robustness_f1} states that $\|\wbftilde(k+1)\|^2$ is bounded by a linear combination of $\|\wbftilde(k)\|^2$, $\etilde^2(k)$, and $n^2(k)$, when an update is performed.

We can now provide the global robustness property of the DS-VNLMS algorithm in Corollary~\ref{cor:global_robustness}.

\begin{cor}[Global robustness of the DS-VNLMS] \label{cor:global_robustness}
	Suppose that the DS-VNLMS algorithm is executed from $k=0$ (initialization) to an iteration $K$. We always have
	\begin{align}
	\dfrac{\| \wbftilde(K) \|^2+\sum\limits_{k \in {\cal K}_{\rm up}}\frac{\mubar(k)}{\alpha(k)}\etilde^2(k)}{\| \wbftilde(0) \|^2+\sum\limits_{k \in {\cal K}_{\rm up}}\frac{\mubar(k)}{\alpha(k)}n^2(k)}< 1, \label{eq:cor_result}
	\end{align}
	where ${\cal K}_{\rm up}$ stands for the set of the iterations that $\wbf(k)$ is updated.
\end{cor}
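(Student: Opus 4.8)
The plan is to obtain the global bound by telescoping the two local relations of Theorem~\ref{thm:local_robustness} across the iterations $k=0,1,\ldots,K-1$. The first step is to merge the two cases of the theorem into a single relation by inserting the indicator $f(e(k),\gammabar)$: for every $k$ one has
\begin{align}
\|\wbftilde(k+1)\|^2+\frac{\mubar(k)}{\alpha(k)}\etilde^2(k)\,f(e(k),\gammabar)
\;\leq\; \|\wbftilde(k)\|^2+\frac{\mubar(k)}{\alpha(k)}n^2(k)\,f(e(k),\gammabar),
\end{align}
where the inequality is strict whenever $f(e(k),\gammabar)=1$ (this is exactly~\eqref{eq:local_robustness_f1}) and collapses to the equality~\eqref{eq:local_robustness_f0} whenever $f(e(k),\gammabar)=0$, since in that case both weighted terms vanish.

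Next I would sum this merged relation over $k=0,\ldots,K-1$. The consecutive coefficient-error energies telescope, leaving only $\|\wbftilde(K)\|^2-\|\wbftilde(0)\|^2$, while the factor $f(e(k),\gammabar)$ automatically restricts the two remaining sums to the update instants; that is, $\sum_{k=0}^{K-1}\frac{\mubar(k)}{\alpha(k)}\etilde^2(k)f(e(k),\gammabar)=\sum_{k\in{\cal K}_{\rm up}}\frac{\mubar(k)}{\alpha(k)}\etilde^2(k)$ and likewise for the noise sum, because $f(e(k),\gammabar)=1$ precisely for $k\in{\cal K}_{\rm up}$. Collecting terms then yields
\begin{align}
\|\wbftilde(K)\|^2+\sum_{k\in{\cal K}_{\rm up}}\frac{\mubar(k)}{\alpha(k)}\etilde^2(k)
\;\leq\; \|\wbftilde(0)\|^2+\sum_{k\in{\cal K}_{\rm up}}\frac{\mubar(k)}{\alpha(k)}n^2(k),
\end{align}
which is the subtractive form of~\eqref{eq:cor_result}.

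The remaining point, and the one I expect to require the most care, is upgrading the accumulated $\leq$ to the strict $<$ asserted in~\eqref{eq:cor_result}. This is legitimate because every summand coming from an update instant obeys the \emph{strict} inequality~\eqref{eq:local_robustness_f1}; hence, as long as at least one update is performed, i.e.\ ${\cal K}_{\rm up}\neq\emptyset$, the aggregated relation is strict. (If no update ever occurs the two sides coincide and the ratio equals $1$, so the non-triviality of ${\cal K}_{\rm up}$ is the natural standing assumption for the algorithm.) Finally, to pass from the subtractive form to the quotient in~\eqref{eq:cor_result}, I would observe that the denominator is positive: $\|\wbftilde(0)\|^2\geq 0$ and every weight $\frac{\mubar(k)}{\alpha(k)}$ with $k\in{\cal K}_{\rm up}$ is positive, since there $0<\mubar(k)<1$ and $\alpha(k)=\|\xbf(k)\|^2+\delta>0$, as established in the proof of Theorem~\ref{thm:local_robustness}. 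Dividing through by this positive quantity delivers the claimed bound.
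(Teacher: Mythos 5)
Your proof is correct and follows essentially the same route as the paper: telescoping the two cases of Theorem~\ref{thm:local_robustness} over $k=0,\ldots,K-1$ (the paper splits the sum into ${\cal K}_{\rm up}$ and its complement ${\cal K}_{\rm up}^c$ rather than merging the cases with the indicator $f$, which is a cosmetic difference) and then dividing the resulting subtractive inequality by the denominator. If anything, you are more careful than the paper: your explicit flagging of the edge case ${\cal K}_{\rm up}=\emptyset$ (where the ratio equals $1$ and the strict inequality fails) identifies an implicit assumption the paper glosses over, and the positivity of the denominator, which the paper merely assumes, follows immediately from your strict subtractive inequality since its left-hand side is nonnegative.
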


\begin{proof}
	Assume that ${\cal K}=\{ 0,1,2,\ldots,K-1\}$.
	Also, let us denote by ${\cal K}_{\rm up}^c={\cal K}\setminus{\cal K}_{\rm up}$ the set of iteration indexes that the Volterra kernels are not updated.
	By Theorem~\ref{thm:local_robustness}, \eqref{eq:local_robustness_f1} is satisfied for all $k\in{\cal K}_{\rm up}$.
	Thus, summing up this inequality for all $k\in{\cal K}_{\rm up}$ yields
	\begin{align}
	\sum_{k \in {\cal K}_{\rm up}} \Big(\| \wbftilde(k+1) \|^2 + \frac{\mubar(k)}{\alpha(k)} \etilde^2(k) \Big) < \sum_{k \in {\cal K}_{\rm up}} \Big(\| \wbftilde(k) \|^2 + \frac{\mubar(k)}{\alpha(k)} n^2(k)\Big). \label{eq:robustness_accumulation_f1}
	\end{align}
	Also, using~\eqref{eq:local_robustness_f0}, for all $k \in {\cal K}_{\rm up}^c$, we have
	\begin{align}
	\sum_{k \in {\cal K}_{\rm up}^c} \| \wbf(k+1) \|^2 = \sum_{k \in {\cal K}_{\rm up}^c} \| \wbf(k) \|^2. \label{eq:robustness_accumulation_f0}
	\end{align}
	By integrating~\eqref{eq:robustness_accumulation_f1} and~\eqref{eq:robustness_accumulation_f0}, we get
	\begin{align}
	\sum_{k \in {\cal K}} \| \wbftilde(k+1) \|^2 
	+ \sum_{k \in {\cal K}_{\rm up}} \frac{\mubar(k)}{\alpha(k)} \etilde^2(k)  
	< \sum_{k \in {\cal K}} \| \wbftilde(k) \|^2 
	+ \sum_{k \in {\cal K}_{\rm up}} \frac{\mubar(k)}{\alpha(k)} n^2(k). \label{eq:robustness_accumulation} 
	\end{align}
	Note that we can eliminate various $\|\wbftilde(k)\|^2$ from both sides of the above inequality and acquire
	\begin{align}
	\| \wbftilde(K) \|^2 + \sum_{k \in {\cal K}_{\rm up}}\frac{\mubar(k)}{\alpha(k)}\etilde^2(k)
	< \| \wbftilde(0) \|^2 + \sum_{k \in {\cal K}_{\rm up}}\frac{\mubar(k)}{\alpha(k)}n^2(k).
	\end{align} 
	Assuming that the right-hand side of the above inequality is nonzero, we get
	\begin{align}
	\dfrac{\| \wbftilde(K) \|^2 + \sum\limits_{k \in {\cal K}_{\rm up}}\frac{\mubar(k)}{\alpha(k)}\etilde^2(k)}{\| \wbftilde(0) \|^2 + 
		\sum\limits_{k \in {\cal K}_{\rm up}}\frac{\mubar(k)}{\alpha(k)}n^2(k)} <1,
	\end{align}
	and it terminates the proof.
\end{proof}

Note that, by Corollary~\ref{cor:global_robustness}, the $l_2$-stability of the DS-VNLMS algorithm from its uncertainties $\{ \wbftilde(0)$, $\{ n(k) \}_{0\leq k\leq K} \}$ to its errors $\{ \wbftilde(K), \{ \etilde(k) \}_{0\leq k\leq K} \}$ is assured independent of the selection of $\gammabar$; however, the $l_2$-stability of the conventional VNLMS algorithm is dependent on the selection of the step-size parameter, and it should be adopted small enough to ensure the $l_2$-stability.

%-------------------------------------------------------------------------------------------------------------------

\subsection{Convergence of $\{\|\wbftilde(k)\|^2\}$ with unknown  noise bound}\label{sub:ds-vnlms-unbounded-noise}

The demonstrated results in the previous section give us some bounds for the evolution of  
$\{\|\wbftilde(k)\|^2\}$ in terms of other parameters.
whereas, in practice, we have observed that the DS-VNLMS algorithm shows a well-behaved convergence for the 
sequence $\{\|\wbftilde(k)\|^2\}$, that is for majority of iterations we get $\|\wbftilde(k+1)\|^2 \leq \|\wbftilde(k)\|^2$.
Thus, in this section, we examine in which situations the sequence $\{\|\wbftilde(k)\|^2\}$ 
is (and is not) decreasing.

\begin{cor}\label{cor:ds_vnlms_decreasing}
	When an update happens (i.e., $f(e(k),\gammabar) = 1$), $\etilde^2(k) \geq n^2(k)$ results in $\| \wbftilde(k+1) \|^2  < \| \wbftilde(k) \|^2$.
\end{cor}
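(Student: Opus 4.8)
The plan is to derive this corollary as an immediate consequence of Theorem~\ref{thm:local_robustness}, since the hypothesis ``an update happens'' is precisely the condition $f(e(k),\gammabar)=1$ under which inequality~\eqref{eq:local_robustness_f1} holds. So the first step is simply to invoke~\eqref{eq:local_robustness_f1} directly, with no new energy-relation derivation needed.

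Next I would isolate $\|\wbftilde(k+1)\|^2$ by transposing the $\etilde^2(k)$ term to the right-hand side of~\eqref{eq:local_robustness_f1}, which gives
\begin{align}
\| \wbftilde(k+1) \|^2 < \| \wbftilde(k) \|^2 + \frac{\mubar(k)}{\alpha(k)}\big(n^2(k)-\etilde^2(k)\big). \nonumber
\end{align}
The key observation is that the coefficient $\frac{\mubar(k)}{\alpha(k)}$ is \emph{strictly positive} whenever $f=1$: this was already established inside the proof of Theorem~\ref{thm:local_robustness}, where it was shown that $0<\mubar(k)<1$ (because $|e(k)|>\gammabar>0$) and $\alpha(k)=\|\xbf(k)\|^2+\delta>0$.

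The final step is to apply the hypothesis $\etilde^2(k)\geq n^2(k)$, so that the factor $n^2(k)-\etilde^2(k)\leq 0$. Multiplying a non-positive number by the positive scalar $\frac{\mubar(k)}{\alpha(k)}$ yields a non-positive additive term, and chaining this with the strict inequality above gives $\|\wbftilde(k+1)\|^2<\|\wbftilde(k)\|^2$, as claimed.

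There is no genuine obstacle here; the argument is a one-line sign analysis built on top of Theorem~\ref{thm:local_robustness}. The only point demanding any care is correctly recalling that $\frac{\mubar(k)}{\alpha(k)}>0$ under an update, which is inherited verbatim from the earlier proof; everything else follows by monotonicity of the inequality. A minor presentational choice is whether to highlight that equality in the hypothesis ($\etilde^2(k)=n^2(k)$) still yields the \emph{strict} conclusion, which it does because the underlying inequality~\eqref{eq:local_robustness_f1} is already strict.
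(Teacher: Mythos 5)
Your proof is correct and follows essentially the same route as the paper's: both rearrange the strict inequality~\eqref{eq:local_robustness_f1}, note that $\frac{\mubar(k)}{\alpha(k)}>0$ when $f(e(k),\gammabar)=1$, and conclude by a sign analysis of the term $\frac{\mubar(k)}{\alpha(k)}\left(\etilde^2(k)-n^2(k)\right)$. The only difference is cosmetic (you move the term to the right-hand side, the paper keeps it on the left), so nothing further is needed.
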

\begin{proof}
	If we rearrange the terms in~\eqref{eq:local_robustness_f1}, we have
	\begin{align}
	\| \wbftilde(k+1) \|^2 + \frac{\mubar(k)}{\alpha(k)} \left( \etilde^2(k) - n^2(k) \right) 
	< \| \wbftilde(k) \|^2,     
	\end{align}
	which is valid for $f(e(k),\gammabar) = 1$.
	Note that when $f(e(k),\gammabar) = 1$, we have $\alpha(k) \in \mathbb{R}_+$ and $\mubar(k) \in (0,1)$; thus, $\frac{\mubar(k)}{\alpha(k)} > 0$. 
	Hence, when $f(e(k),\gammabar) = 1$ and $\etilde^2(k) \geq n^2(k)$, we get $\frac{\mubar(k)}{\alpha(k)} \left( \etilde^2(k) - n^2(k) \right)\geq0$. 
	As the result, when an update happens, we obtain $\etilde^2(k) \geq n^2(k)  \Rightarrow  \| \wbftilde(k+1) \|^2  < \| \wbftilde(k) \|^2$. 
\end{proof}

It is good to mention that Corollary~\ref{cor:ds_vnlms_decreasing} affirms when an update is occurred by the DS-VNLMS algorithm and the energy of the error signal $e^2(k)$ is dominated by $\etilde^2(k)$, then the improvement in the estimate of $\wbf(k+1)$ is guaranteed.

For the first iterations of the DS-VNLMS algorithm, the absolute value of the error signal is large; as a results, we have $|e(k)|>\gammabar$ and $\etilde^2(k)>n^2(k)$.
It leads to the monotonic decreasing sequence $\{\|\wbftilde(k)\|^2\}$ in the transient period.
Also, when there is not enough innovation in the input signal during the transient period, the DS-VNLMS algorithm does not execute any updates, and we get $\| \wbftilde(k+1) \|^2  = \| \wbftilde(k) \|^2$.
Therefore, for any $k$ in the transient period, we have $\| \wbftilde(k+1) \|^2  \leq \| \wbftilde(k) \|^2$ with very high probability.
However, after the convergence, for few iterations of the DS-VNLMS algorithm we get $\| \wbftilde(k+1) \|^2  > \| \wbftilde(k) \|^2$.
Indeed, counting the exact number of iterations satisfying this inequality is not possible, but we can compute an upper bound probability for the occurrence of this event by
\begin{align}
\mathbb{P}[\|\wbftilde(k+1)\|^2 > \|\wbftilde(k)\|^2]    &\leq \mathbb{P}[\{|e(k)|>\gammabar\}\cap\{\etilde^2(k)<n^2(k)\}]      \nonumber\\
&<\mathbb{P}[|e(k)|>\gammabar]={\rm erfc}\left(\sqrt{\frac{\tau}{2}}\right) ,  \label{eq:ds_vnlms_probability}
\end{align}
where ${\rm erfc}(\cdot)$ is the complementary error 
function~\cite{Proakis_DigitalCommunications_book1995}. 
The details of the last equality can be observed in~\cite{Galdino_errorbound_ISCAS2006} by defining $\gammabar=\sqrt{\tau\sigma_n^2}$, where $\tau \in \mathbb{R}_+$ (as a rule of thumb $1\leq\tau\leq5$) and by simulating the error signal $e(k)$ as a zero-mean Gaussian random variable with variance $\sigma_n^2$.

Note that the probability of attaining $\|\wbftilde(k+1)\|^2 > \|\wbftilde(k)\|^2$ is insignificant. 
As example, for $\tau=3, 4$, and 5 we have ${\rm erfc}\Big(\sqrt{\frac{\tau}{2}}\Big)=0.0832, 	
0.0455$, and $0.0253$, respectively.
Therefore, for the majority of the iterations of the DS-VNLMS algorithm, we have $\| \wbftilde(k+1) \|^2  \leq \| \wbftilde(k) \|^2$; in other words, that the DS-VNLMS algorithm takes advantages of the input data efficiently.
In contrast with the classical adaptive Volterra algorithms, we demonstrated that for the DS-VNLMS algorithm rarely we have $\| \wbftilde(k+1) \|^2  > \| \wbftilde(k) \|^2$.
Also, we will verify this property experimentally in Section~\ref{sec:simulations}.

%-------------------------------------------------------------------------------------------------------------------

\subsection{Convergence of $\{\|\wbftilde(k)\|^2\}$ with known noise bound}\label{sub:ds-vnlms-bounded-noise}

Here, we show that when we know the noise bound, we can adopt the threshold parameter $\gammabar$ of the DS-VNLMS algorithm in such a way that $\{\|\wbftilde(k)\|^2\}$ becomes a monotonic decreasing sequence.

\begin{thm}[Strong Local Robustness of DS-VNLMS]\label{thm:strong-local-robustness-ds-vnlms}
	Suppose that the noise is bounded by the constant $C \in \mathbb{R}_+$, namely, $|n(k)|\leq C$, for all $k\in\mathbb{N}$. 
	If we select $\gammabar \geq 2C$, then $\{\|\wbftilde(k)\|^2\}$ is a monotonic decreasing sequence; that is $\|\wbftilde(k+1)\|^2\leq\|\wbftilde(k)\|^2,\forall k$. 
\end{thm}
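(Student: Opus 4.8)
The plan is to reduce the statement to Corollary~\ref{cor:ds_vnlms_decreasing} through a case split on whether an update is performed at iteration $k$. When no update occurs, i.e. $f(e(k),\gammabar)=0$, Equation~\eqref{eq:local_robustness_f0} of Theorem~\ref{thm:local_robustness} immediately gives $\|\wbftilde(k+1)\|^2=\|\wbftilde(k)\|^2$, so the desired monotonicity $\|\wbftilde(k+1)\|^2\leq\|\wbftilde(k)\|^2$ holds with equality and nothing further is required in this case.

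The substantive case is $f(e(k),\gammabar)=1$. Here Corollary~\ref{cor:ds_vnlms_decreasing} already delivers the strict decrease $\|\wbftilde(k+1)\|^2<\|\wbftilde(k)\|^2$ as soon as its hypothesis $\etilde^2(k)\geq n^2(k)$ is met. Hence the entire theorem reduces to verifying that, under an update together with the threshold choice $\gammabar\geq 2C$, one has $|\etilde(k)|\geq|n(k)|$, which I expect to be the only real obstacle.

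To establish this, I would first use~\eqref{eq:function_f} to note that an update forces $|e(k)|>\gammabar\geq 2C$, then decompose the error via~\eqref{eq:error_signal} as $\etilde(k)=e(k)-n(k)$, and finally invoke the triangle inequality together with the assumed noise bound $|n(k)|\leq C$ to obtain
\begin{align}
|\etilde(k)|\geq|e(k)|-|n(k)|>2C-C=C\geq|n(k)|. \nonumber
\end{align}
This yields $\etilde^2(k)>n^2(k)$, so Corollary~\ref{cor:ds_vnlms_decreasing} applies and the update case is settled. The factor of two in $\gammabar\geq 2C$ is exactly what makes the chain $2C-|n(k)|\geq C\geq|n(k)|$ work, so the hypothesis is sharp; combining the two cases gives $\|\wbftilde(k+1)\|^2\leq\|\wbftilde(k)\|^2$ for every $k$, i.e. $\{\|\wbftilde(k)\|^2\}$ is monotonically decreasing. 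The one subtlety worth double-checking is that the bound $|n(k)|\leq C$ is applied uniformly in $k$, which is guaranteed by the standing assumption of the theorem, so no additional regularity on the noise sequence is needed.
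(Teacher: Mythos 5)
Your proof is correct and follows essentially the same route as the paper's: both handle the no-update case via~\eqref{eq:local_robustness_f0} and reduce the update case to Corollary~\ref{cor:ds_vnlms_decreasing} by showing that $f(e(k),\gammabar)=1$ together with $\gammabar\geq 2C$ forces $|\etilde(k)|>C\geq|n(k)|$. The only difference is cosmetic: where the paper establishes this bound by splitting into the sign cases $\etilde(k)\geq 0$ and $\etilde(k)\leq 0$, you get the identical estimate in one line from the reverse triangle inequality $|\etilde(k)|\geq|e(k)|-|n(k)|>\gammabar-C\geq C$.
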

\begin{proof}
	For $f(e(k),\gammabar)=1$ we have $|e(k)| = |\etilde(k) + n(k)|>\gammabar$.
	This implies that (i) for $\etilde(k)\geq0$ we get $\etilde(k) > \gammabar - n(k)$ or (ii) for $\etilde(k)\leq0$ we have $\etilde(k) < -\gammabar - n(k)$. 
	Note that $n(k) \in [-C,C]$ and $\gammabar \in [2C,\infty)$ by the hypothesis of the theorem; thus, the bound for $\etilde(k)$ by searching the minimum of (i) and the maximum of (ii) can be obtained as follows:\\
	(i) $\etilde(k) > \gammabar - n(k) \Rightarrow  \etilde_{\rm min} > \gammabar - C \geq C$; \\
	(ii) $\etilde(k) <-\gammabar - n(k) \Rightarrow  \etilde_{\rm max} <-\gammabar + C \leq -C$. \\
	By using (i) and (ii), we can conclude that if $\gammabar \geq 2C$, then $| \etilde(k) | > C$.
	This means that $| \etilde(k) | > | n(k) |$, for all $k\in\mathbb{N}$.
	Subsequently, Corollary~\ref{cor:ds_vnlms_decreasing} results in $\|\wbftilde(k+1)\|^2 < \|\wbftilde(k)\|^2$, for all $k\in\mathbb{N}$, when $f(e(k),\gammabar)=1$. 
	Moreover, for $f(e(k),\gammabar)=0$, we get $\|\wbftilde(k+1)\|^2 = \|\wbftilde(k)\|^2$. 
	Hence, for $\gammabar \geq 2C \Rightarrow \|\wbftilde(k+1)\|^2\leq\|\wbftilde(k)\|^2$, for all $k\in\mathbb{N}$.  
\end{proof}

\begin{cor}[Strong Global Robustness of DS-VNLMS]\label{cor:strong-global-robustness-ds-vnlms}
	Suppose that the DS-VNLMS algorithm is running from iteration $0$ to a given iteration $K\in\mathbb{N}$. 
	If $\gammabar \geq 2C$, then we have $\|\wbftilde(K)\|^2 \leq \|\wbftilde(0)\|^2$, where the equality is guaranteed only when no update is implemented during all the iterations. 
\end{cor}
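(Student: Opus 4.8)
The plan is to prove the Strong Global Robustness corollary by directly leveraging the Strong Local Robustness theorem (Theorem~\ref{thm:strong-local-robustness-ds-vnlms}), which has already established the per-iteration inequality under the hypothesis $\gammabar \geq 2C$. Since the corollary is essentially the telescoping/accumulated version of that theorem, the proof should be short and mechanical rather than requiring a fresh argument.

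\textbf{Approach and key steps.} First I would invoke Theorem~\ref{thm:strong-local-robustness-ds-vnlms}, which guarantees that under the assumption $\gammabar \geq 2C$ we have the monotonicity relation $\|\wbftilde(k+1)\|^2 \leq \|\wbftilde(k)\|^2$ for \emph{every} iteration $k$, with strict inequality precisely when an update occurs ($f(e(k),\gammabar)=1$) and equality when no update occurs ($f(e(k),\gammabar)=0$). The second step is to chain these inequalities across the full run from $k=0$ to $k=K$. Concretely, writing out the telescoping chain
\begin{align}
\|\wbftilde(K)\|^2 \leq \|\wbftilde(K-1)\|^2 \leq \cdots \leq \|\wbftilde(1)\|^2 \leq \|\wbftilde(0)\|^2, \nonumber
\end{align}
immediately yields $\|\wbftilde(K)\|^2 \leq \|\wbftilde(0)\|^2$, establishing the main claim. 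The third step handles the equality condition: I would argue that equality $\|\wbftilde(K)\|^2 = \|\wbftilde(0)\|^2$ forces equality at \emph{every} link of the chain, and since Theorem~\ref{thm:strong-local-robustness-ds-vnlms} gives strict decrease whenever $f(e(k),\gammabar)=1$, the only way to avoid any strict drop is to have $f(e(k),\gammabar)=0$ for all $k \in \{0,1,\ldots,K-1\}$ --- that is, no update is ever implemented. Conversely, if no update occurs, every link is an equality by~\eqref{eq:local_robustness_f0}, so the overall equality holds; this gives the "if and only if" characterization of the equality case.

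\textbf{Main obstacle.} Since the heavy lifting --- translating the noise bound $|n(k)| \leq C$ and the threshold choice $\gammabar \geq 2C$ into $|\etilde(k)| > |n(k)|$, and then into per-step monotonicity via Corollary~\ref{cor:ds_vnlms_decreasing} --- is already completed inside Theorem~\ref{thm:strong-local-robustness-ds-vnlms}, there is no genuine analytical difficulty remaining. The only point requiring mild care is the equality clause: I must be precise that a single update anywhere in the range $\{0,\ldots,K-1\}$ produces a strict inequality that propagates (by the monotone chain) to make $\|\wbftilde(K)\|^2 < \|\wbftilde(0)\|^2$, so that the stated equality can hold \emph{only} when updates are entirely absent. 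Beyond that, the argument is a routine summation/telescoping over the local result, and I would present it compactly without re-deriving the local bound.
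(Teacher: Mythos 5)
Your proposal is correct and follows exactly the route the paper itself indicates: the paper omits the proof, stating it is a trivial consequence of Theorem~\ref{thm:strong-local-robustness-ds-vnlms} obtained by the same accumulation procedure as Corollary~\ref{cor:global_robustness}, which is precisely your telescoping of the per-iteration monotonicity. Your explicit handling of the equality clause (strict decrease whenever $f(e(k),\gammabar)=1$ forces equality only when no update ever occurs) is a welcome detail the paper leaves implicit.
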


The proof of this Corollary is not presented since, by using the same procedure of Corollary~\ref{cor:global_robustness}, it is a trivial consequence of Theorem~\ref{thm:strong-local-robustness-ds-vnlms}.

%-------------------------------------------------------------------------------------------------------------------

\subsection{Time-varying $\gammabar(k)$} \label{sub:ds-vnlms-time-varying-gammabar}  

Note that $\gammabar$ shows a trade-off between the convergence rate and the computational efficiency of the DS-VNLMS algorithm. 
Indeed, a large value for $\gammabar$ leads to low computational complexity by reducing the update rate, whereas it can decrease the convergence speed due to avoid update in the transient period.
Hence, selecting a suitable $\gammabar$ is fundamental in taking advantage of data selection approach.
As an alternative technique, we can adopt a time-varying error bound $\gammabar(k)$ as 
$\gammabar(k) \triangleq \sqrt{\tau(k) \sigma_n^2}$~\cite{Hamed_SMrobustness_jasp2017}, where 
\begin{align}\label{eq:gammabar-timevar}
\tau(k) \triangleq\begin{cases}
\text{Low value (e.g., $\tau(k) \in [1,5]$)} \\ \hspace{2cm}\text{if $k \in$ transient period, }   \\
\text{High value (e.g., $\tau(k) \in [5,9]$)} \\ \hspace{2cm}\text{if $k \in $ steady-state.}
\end{cases}
\end{align}
This $\gammabar$ results in a great compromise between the computational resources and the performance of the DS-VNLMS algorithm.
Moreover, when the noise bound $C$ is known, in the steady-state period, $\gammabar$ should be adopted less than or equal to $C$.

To use the $\gammabar(k)$ proposed in~\eqref{eq:gammabar-timevar}, the algorithm has to be capable of monitoring the environment to recognize a transition from transient to steady-state periods.
To this end, we can monitor $|e(k)|$. 
In other words, we can create a window with the length $E \in \mathbb{N}$ including Boolean variables denoting the iterations where an update is implemented in the $E$ recent iterations.
Then, if we detect many updates in the window, we assume that the algorithm is in the transient period; otherwise, we are in the steady-state period.

%-------------------------------------------------------------------------------------------------------------------

\section{Simulations} \label{sec:simulations}

In this section, we verify the robustness of the DS-VNLMS algorithm in system identification scenarios.
The nonlinear Volterra channel is given by
\begin{align}
d(k)=&-0.76x(k)+0.5x^2(k)+2x(k)x(k-2)\nonumber\\
&-0.5x^2(k-3)+n(k),
\end{align}
where $n(k)$ is a zero-mean white Gaussian noise (WGN) with the variance $\sigma_n^2=0.01$.
The order and the memory-length of the adaptive Volterra filter are equal to 3.
The robustness has been verified considering two different input signals: (i) a zero-mean WGN with the unit variance, (ii) a first-order autoregressive (AR(1)) signal produced by $x(k)=0.95x(k-1)+m(k)$, where $m(k)$ is a zero-mean WGN with the unit variance.
The regularization parameter is adopted as $\delta=10^{-9}$, and the algorithms are initialized with the null vector.
Also, we have tested the robustness using two different values of $\gammabar$; i.e., $\sqrt{5\sigma_n^2}$ and $\sqrt{2\sigma_n^2}$.
Moreover, let us denote the right-hand side (RHS) and the left-hand side (LHS) of~\eqref{eq:local_robustness_f1} by $r(k)$ and $l(k)$, respectively.

\begin{figure}[t!]
	\centering
	\subfigure[b][]{\includegraphics[width=.48\linewidth]{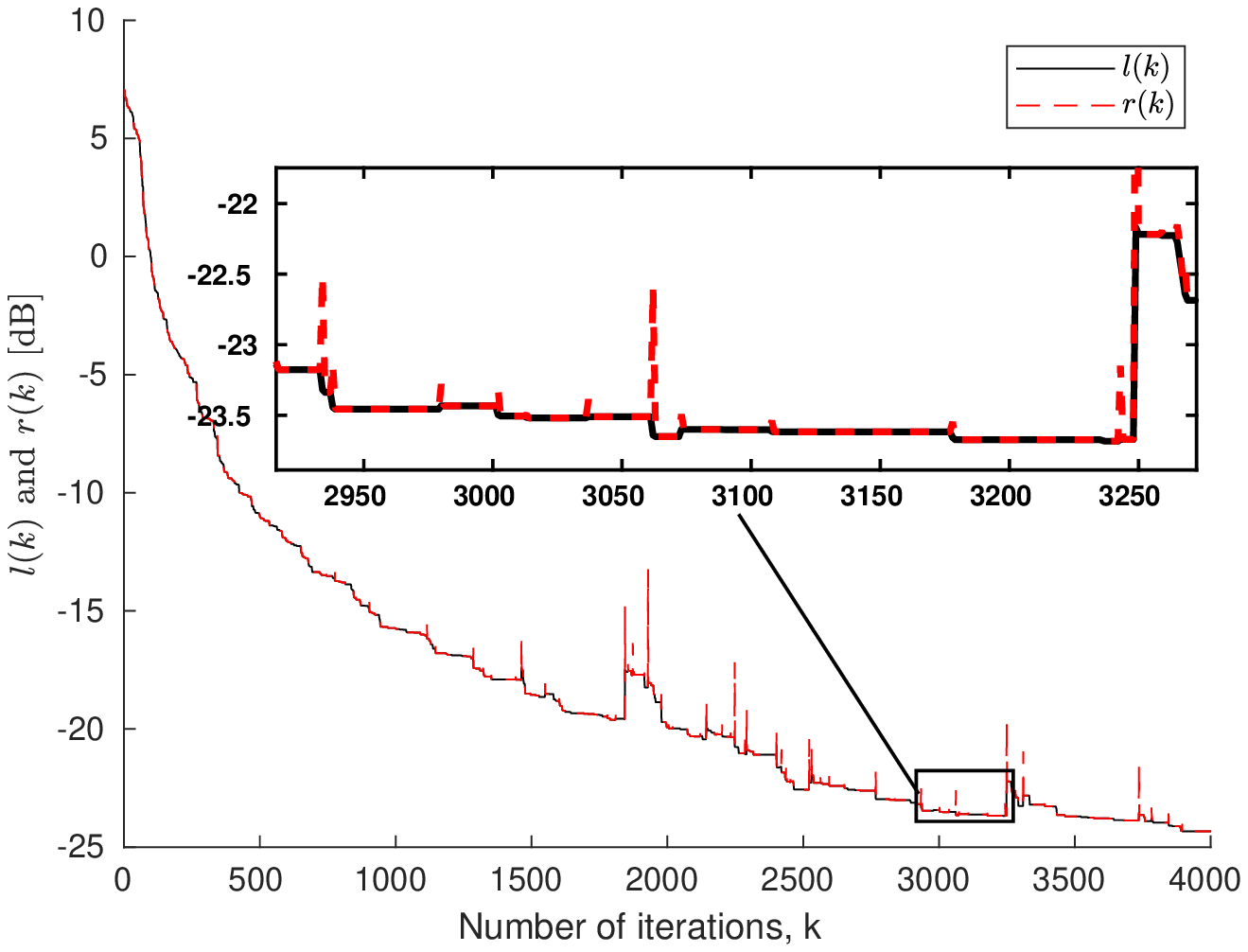}
		\label{fig:gamma5_wgn}}
	\subfigure[b][]{\includegraphics[width=.48\linewidth]{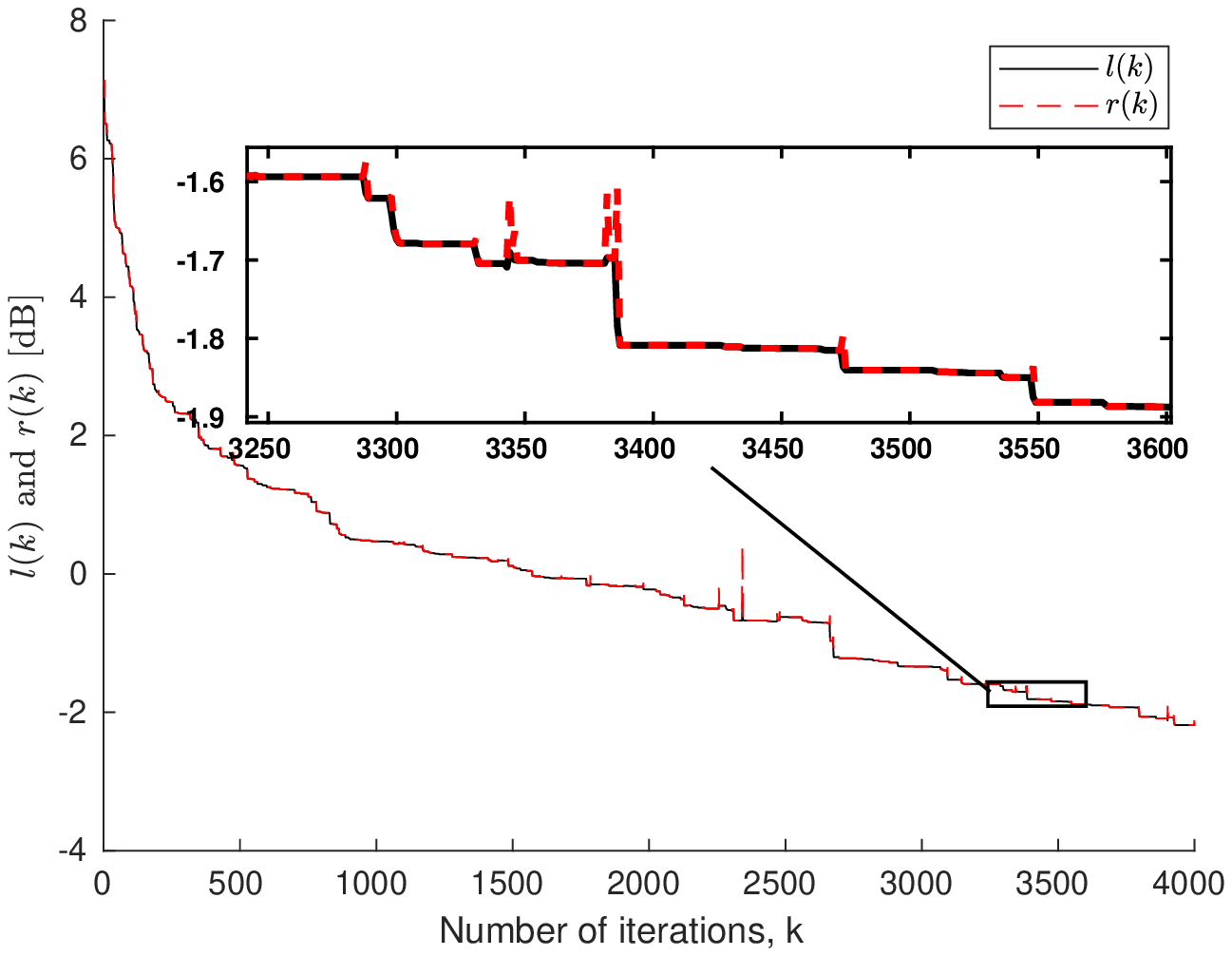}
		\label{fig:gamma5_ar1}}
	%\vspace{-4mm}
	\caption{The values of $l(k)$ and $r(k)$ over the iterations when $\gammabar=\sqrt{5\sigma_n^2}$: (a) the WGN input signal; (b) the AR(1) input signal. \label{fig:large_gammabar}}
	%\vspace{-3mm}
\end{figure}

Figures~\ref{fig:gamma5_wgn} and~\ref{fig:gamma5_ar1} show $l(k)$ and $r(k)$ for WGN input signal and AR(1) input signal, respectively, when $\gammabar=\sqrt{5\sigma_n^2}$.
Also, Figures~\ref{fig:gamma2_wgn} and~\ref{fig:gamma2_ar1} illustrate the same results but assuming $\gammabar=\sqrt{2\sigma_n^2}$.
In all figures, we can observe that, for both input signals and both values of $\gammabar$, $l(k)$ is strictly below $r(k)$ or is overlapped with $r(k)$; that is, $l(k)\leq r(k)$.
Note that when $l(k)$ and $r(k)$ overlap each other, it means that $f(e(k),\gammabar)=0$ and the DS-VNLMS does not update the adaptive Volterra kernels and $l(k)=r(k)$, otherwise $l(k)<r(k)$ is true and an update is performed.
Thus, these figures substantiate Theorem~\ref{thm:local_robustness}.

\begin{figure}[t!]
	\centering
	\subfigure[b][]{\includegraphics[width=.48\linewidth]{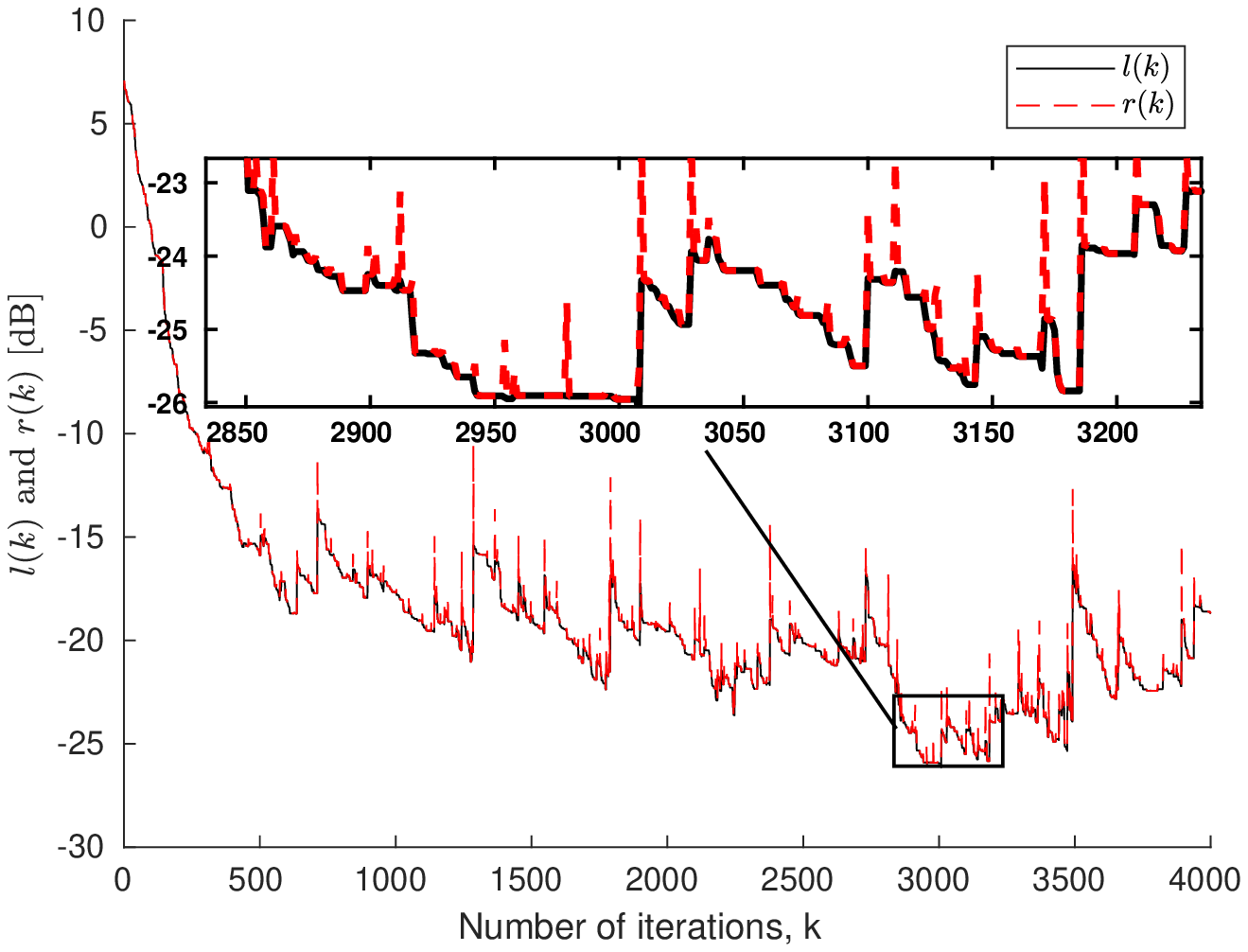}
		\label{fig:gamma2_wgn}}
	\subfigure[b][]{\includegraphics[width=.48\linewidth]{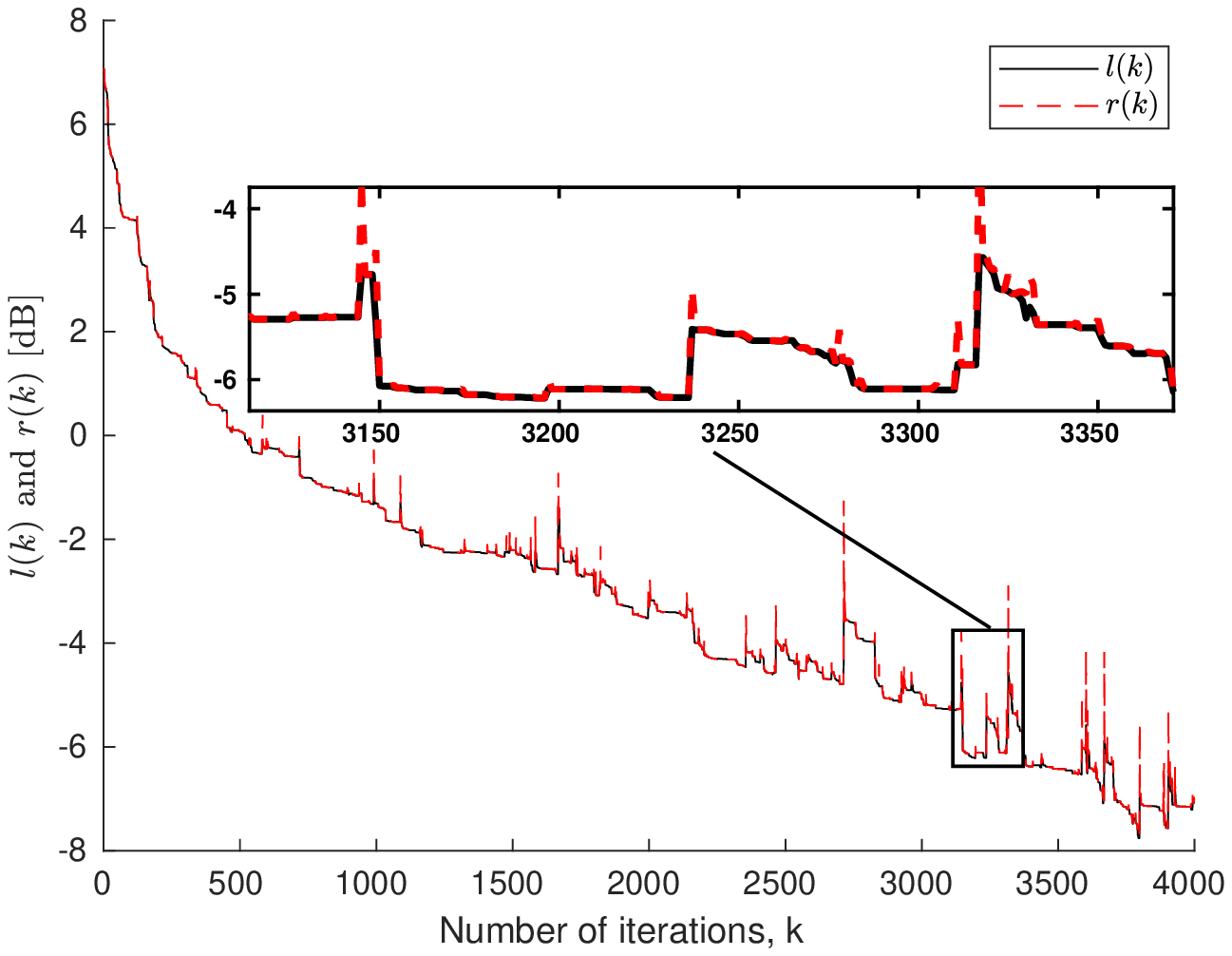}
		\label{fig:gamma2_ar1}}
	%\vspace{-4mm}
	\caption{The values of $l(k)$ and $r(k)$ over the iterations when $\gammabar=\sqrt{2\sigma_n^2}$: (a) the WGN input signal; (b) the AR(1) input signal. \label{fig:small_gammabar}}
	%\vspace{-3mm}
\end{figure}

When the VNLMS and the DS-VNLMS algorithms have been implemented, Figures~\ref{fig:Volterra_all} and~\ref{fig:Volterra_all_correlated} illustrate the sequence  $\{\|\wbftilde(k)\|^2\}$ for WGN input signal and AR(1) input signal, respectively.
Three different cases for the DS-VNLMS algorithm have been considered: fixed $\gammabar$ with unknown noise bound (magenta solid line), fixed $\gammabar$ with known noise bound $C=0.1$ (blue solid line), and time-varying $\gammabar(k)$, described by $\sqrt{5\sigma_n^2}$ for the transient period and $\sqrt{9\sigma_n^2}$ for the steady-state period, 
with unknown noise bound (black solid line). 
When we utilized the time-varying $\gammabar(k)$, the window length is adopted as $E=20$, and we supposed that the algorithm is in the steady-state when the number of updates in the window 
is less than 5. 
Moreover, when we have executed the VNLMS algorithm, two different step-sizes are chosen as $\mu=0.8$ and $\mu=0.3$. The larger step-size leads to fast convergence and high misadjustment, whereas $\mu=0.3$ results in slow convergence and low misadjustment.

In Figure~\ref{fig:Volterra_all}, we observe that the sequence $\{\|\wbftilde(k)\|^2\}$ presented by the magenta curve increases only $25$ times during the $2500$ iterations, it means that the DS-VNLMS algorithm did not promote the adaptive Volterra kernels only in $25$ iterations.
In the case of the correlated input signal, this number if 17 among 2500 iterations.
Therefore, in this experiment, for the WGN and the AR(1) input signals, we get $\mathbb{P}[\|\wbftilde(k+1)\|^2>\|\wbftilde(k)\|^2] = 0.01$ and $0.0068$, where they are lower than the upper bound given by ${\rm erfc}(\sqrt{2.5})=0.0253$, as described in Subsection~\ref{sub:ds-vnlms-unbounded-noise}.
Moreover, note that the inequality $\|\wbftilde(k+1)\|^2>\|\wbftilde(k)\|^2$ did not happen in the transient period since 
in this period $\etilde^2(k)$ is generally large due to a remarkable mismatch between $\wbf(k)$ and $\wbf_o$.
It means that the condition described in Corollary~\ref{cor:ds_vnlms_decreasing} is regularly held.

Furthermore, by verifying the blue curves in Figures~\ref{fig:Volterra_all} and~\ref{fig:Volterra_all_correlated}, we can observe that when the noise bound is known, by adopting $\gammabar \geq 2B$, the sequence $\{\|\wbftilde(k)\|^2\}$ is monotonic decreasing; This substantiate Theorem~\ref{thm:strong-local-robustness-ds-vnlms} and Corollary~\ref{cor:strong-global-robustness-ds-vnlms}.
In Figure~\ref{fig:Volterra_all}, the sequence $\{\|\wbftilde(k)\|^2\}$ denoted by the black curve increases only one time, and in Figure~\ref{fig:Volterra_all_correlated}, the black curve never increases.
This corroborate the benefit of applying a time-varying $\gammabar(k)$ when the noise bound is unknown.
Note that the behavior of $\{\|\wbftilde(k)\|^2\}$ for the VNLMS algorithm is extremely irregular, and in many iterations we have $\|\wbftilde(k+1)\|^2>\|\wbftilde(k)\|^2$.
This happens since the VNLMS algorithm performed many unnecessary updates.

Therefore, the DS-VNLMS algorithm as compared to the VNLMS algorithm has high convergence rate, low computational burden, and well-behaved sequence $\{\|\wbftilde(k)\|^2\}$.
In Figure~\ref{fig:Volterra_all}, the update rates of the DS-VNLMS algorithm in the magenta, blue, and black curves are 5$\%$, 1.4$\%$, and 1.7$\%$, respectively.
Also, in Figure~\ref{fig:Volterra_all}, the update rates for the magenta, blue, and black curves are 4.9$\%$, 1.1$\%$, and 1.4$\%$, respectively.

\begin{figure}[t!]
	\centering
	\subfigure[b][]{\includegraphics[width=.48\linewidth]{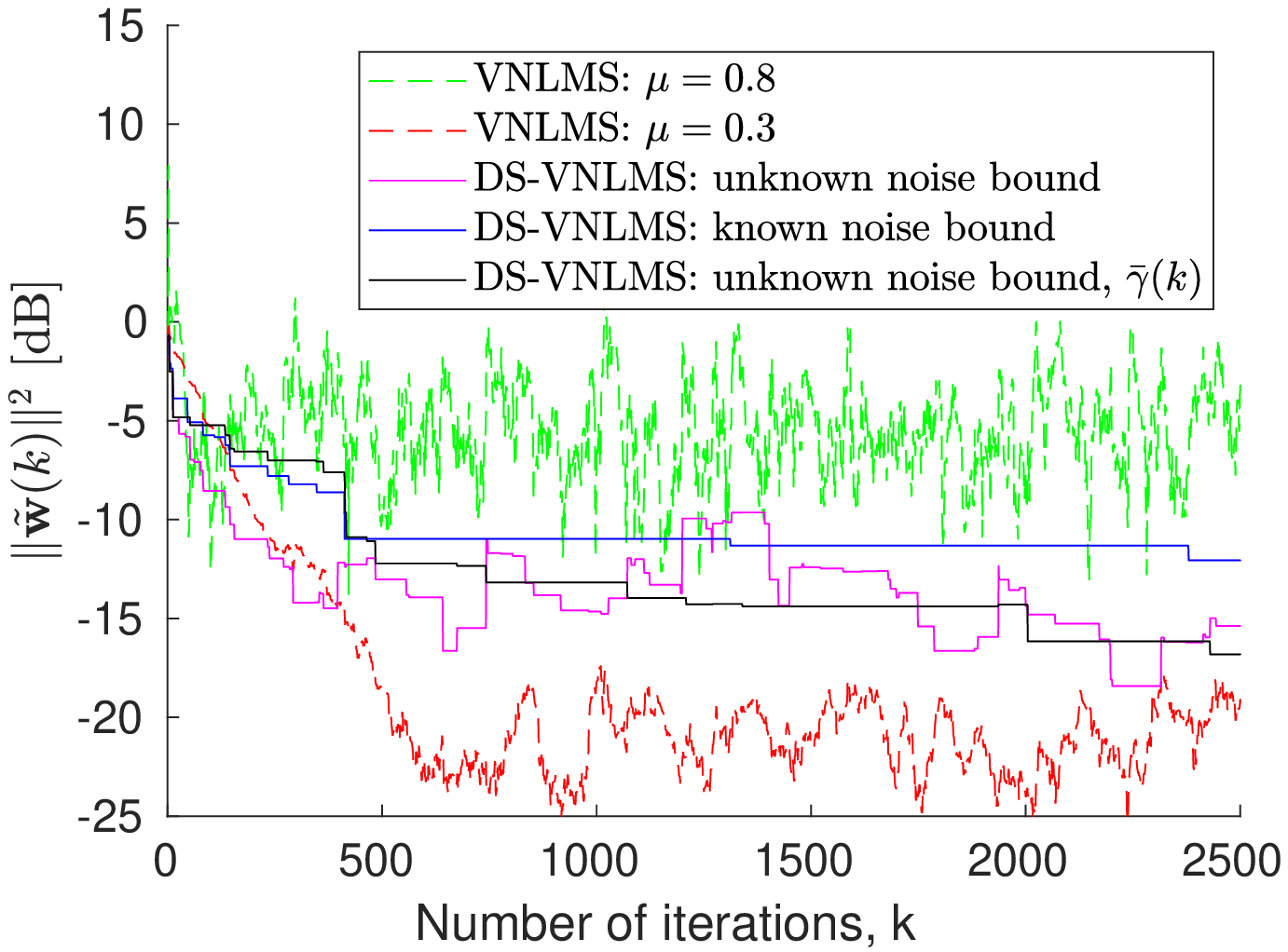}
		\label{fig:Volterra_all}}
	\subfigure[b][]{\includegraphics[width=.48\linewidth]{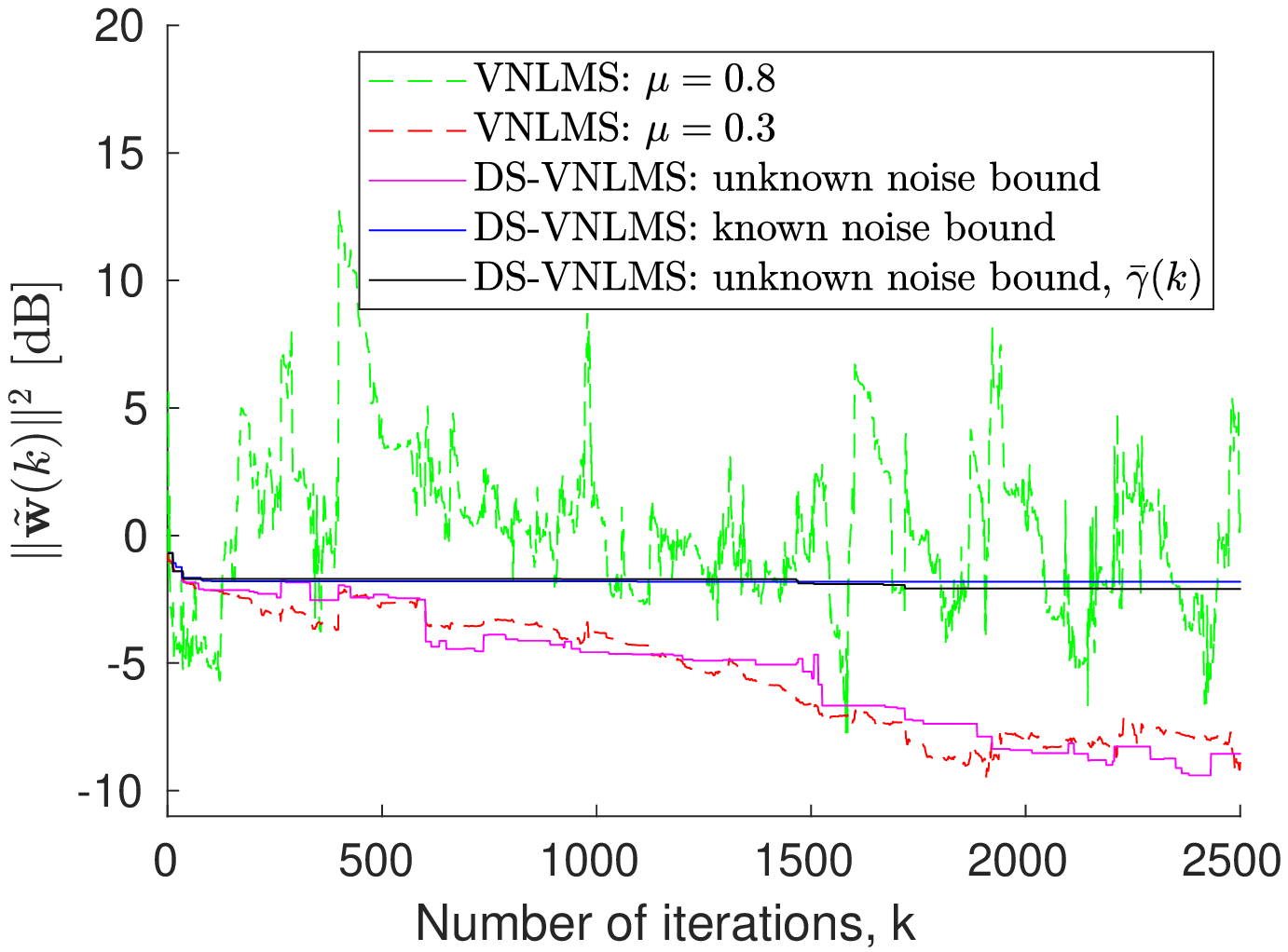}
		\label{fig:Volterra_all_correlated}}
	%\vspace{-4mm}
	\caption{$\|\wbftilde(k)\|^2 \triangleq \| \wbf_* - \wbf(k) \|^2$ for the VNLMS and the DS-VNLMS algorithms: (a) the WGN input signal; (b) the AR(1) input signal. \label{fig:wtilde}}
	%\vspace{-3mm}
\end{figure}

%----------------------------------------------------------------------------------------------------------------

\section{Conclusions} \label{sec:conclusions}

In this paper, the robustness (in the sense of $l_2$-stability) of the DS-VNLMS algorithm has been analyzed.
First, we have reviewed the Volterra series and the robustness criterion, then the local robustness of the DS-VNLMS algorithm has been discussed.
Moreover, with the help of the local robustness summarized in Theorem~\ref{thm:local_robustness}, the global robustness property of the DS-VNLMS algorithm has been presented.
In other words, we have demonstrated that, when the energy of the additive noise signal is bounded, the DS-VNLMS algorithm never diverges, no matter how its parameters are adopted, and the energy of the errors is less than the energy of the uncertainties.
Furthermore, when the noise bound is known, we described how to choose a suitable $\gammabar$ such that the DS-VNLMS algorithm never produces a worse estimate.  
Also, for the situation in which the noise bound is unknown, we proposed a time-varying $\gammabar(k)$ that obtains high convergence rate and takes an efficient advantage of the input data. 
Finally, the numerical results substantiate the validity of the implemented analysis.

% if have a single appendix:
%\appendix[Proof of the Zonklar Equations]
% or
%\appendix  % for no appendix heading
% do not use \section anymore after \appendix, only \section*
% is possibly needed

% use appendices with more than one appendix
% then use \section to start each appendix
% you must declare a \section before using any
% \subsection or using \label (\appendices by itself
% starts a section numbered zero.)
%

%\appendices
%\section{Proof of the First Zonklar Equation}
%Appendix one text goes here.

% you can choose not to have a title for an appendix
% if you want by leaving the argument blank
%\section{}
%Appendix two text goes here.

% use section* for acknowledgment
%\section*{Acknowledgment}
%
%
%The authors would like to thank...

% Can use something like this to put references on a page
% by themselves when using endfloat and the captionsoff option.
\ifCLASSOPTIONcaptionsoff
  \newpage
\fi

\bibliographystyle{alpha}

%\vskip 0.4 true cm

%\bibliographystyle{alpha}

% that's all folks
\end{document}